\newcommand{\A}{\mathbf{A}}
\newcommand{\B}{\mathbf{B}}
\newcommand{\C}{\mathbf{C}}
\newcommand{\D}{\mathbf{D}}
\title[HiPPO-Prophecy: State-Space Models can Provably Learn Dynamical Systems in Context]{HiPPO-Prophecy: State-Space Models can Provably Learn Dynamical Systems in Context}
\newcounter{relctr} 
\everydisplay\expandafter{\the\everydisplay\setcounter{relctr}{0}} 
\begin{document}
\maketitle
\footnotetext[1]{Equal Contribution}
\begin{abstract}%
This work explores the in-context learning capabilities of State Space Models (SSMs) and presents, to the best of our knowledge, the first theoretical explanation of a possible underlying mechanism. We introduce a novel weight construction for SSMs, enabling them to predict the next state of any dynamical system after observing previous states without parameter fine-tuning. 
This is accomplished by extending the HiPPO framework to demonstrate that continuous SSMs can approximate the derivative of any input signal. Specifically, we find an explicit weight construction for continuous SSMs and provide an asymptotic error bound on the derivative approximation. The discretization of this continuous SSM subsequently yields a discrete SSM that predicts the next state. Finally, we demonstrate the effectiveness of our parameterization empirically. This work should be an initial step toward understanding how sequence models based on SSMs learn in context.
\end{abstract}
\section{Introduction}
In-context learning (ICL) refers to a model's ability to solve tasks unseen during training, only based on information provided in context, without updating its weights. ICL has gained significant attention since \citet{brown2020language} demonstrated that transformer models \cite{vaswani2023attention} trained in large and diverse language corpora can learn in context without being explicitly trained for it. More specifically, they showed that given a sequence of input-output pairs from an unseen task, the model can predict the output corresponding to a new input. We will refer to this type of ICL as few-shot in-context learning to emphasize the presence of input-output pairs in-context. Subsequently, to \cite{brown2020language}, there has been a variety of empirical \cite{olsson2022incontext, kossen2024incontext, wei2023larger, garg2023transformers} as well as theoretical \cite{vonoswald2023transformers, akyürek2023learning, bai2023transformers, vladymyrov2024linear,zhang2023trained,wu2024pretraining, zhang2024incontext,mahankali2023step, ahn2023transformers} works studying the few-shot ICL capabilities of transformer models and the mechanisms underlying them.
Nonetheless, there still exists a gap between the studied few-shot ICL settings and ICL capabilities that emerge in modern sequence models, trained autoregressively on sequential data \cite{akyürek2024incontext, vonoswald2023uncovering}. Previous works by \cite{akyürek2024incontext, vonoswald2023uncovering} aim to close this gap by studying the ICL capabilities of autoregressively trained transformer models to predict the next value \smash{$f_{k+1}$} of an unseen sequence, when provided with values $f_{\leq k}$ in-context. We term this mode of learning from context as autoregressive ICL.\par
Concurrently to these studies, the development of deep state space models such as S4 by \citet{gu2022efficiently} has sparked a resurgence of recurrent sequence models, resulting in a family of models, which we refer to as generalized state space models (GSSMs) \cite{hasani2022liquid,gu2022parameterization, orvieto2023resurrecting, fu2023hungry,gu2023mamba, de2024griffin}. GSSMs offer a promising alternative to transformers by addressing two of their fundamental shortcomings, namely length generalization and quadratic computational cost (flops) with respect to the sequence length \cite{gu2023mamba}. Similarly to transformer-based models, GSSMs have empirically been shown to be capable of ICL \cite{lu2023structured, akyürek2024incontext, park2024mamba, grazzi2024mamba}. Despite their potential, our theoretical understanding of the mechanisms underlying ICL in GSSMs is still limited. To the best of our knowledge, this work provides the first explanation of how GSSMs can perform autoregressive ICL. For this we consider SSMs on the task of predicting the next state of any dynamical system, given a sequence of previous states.
Our contributions can be summarized as follows.
\begin{itemize}[leftmargin=*,itemsep=-3pt]
  \item Extending the HiPPO framework to show that SSMs can approximate the next state of any dynamical system up to first order from a sequence of previous states through an explicit weight construction. (\S \ref{sec:solving_dynamical_syemstems_with_SSMs})
  \item An asymptotic bound on the error incurred when approximating the derivative of the input signal with a continuous SSM parametrized with our proposed construction for the FouT basis. (\S \ref{sec:solving_dynamical_syemstems_with_SSMs})
  \item An experimental evaluation of our weight construction on different function classes, model sizes, and context lengths. (\S \ref{sec:experiments})
\end{itemize}
\section{SSMs and HiPPO Theory}
\label{sec:ssm_and_hippo_theory}
SSMs map an input signal $u(t) \in \mathbb{R}$ to an output signal $y(t) \in \mathbb{R}$ via a hidden state $x(t) \in \mathbb{R}^N$ and take the following form:
\begin{equation}
    \begin{cases}
        \dot{x}(t) = \A x(t)+\B u(t) \\
        y(t) = \C x(t)+\D u(t)
    \end{cases}
\end{equation}
One key property of these models is their ability to memorize past information, $u(s)$ for $s\leq t$, 
in their hidden state $x(t)$. This capability was established in the HiPPO theory \cite{Gu2020Hippo,gu2022train}. HiPPO theory considers a Hilbert space spanned by the orthogonal basis functions $\{ p_n(t,s)\}_{n \geq 0}$, equipped with a measure $\omega(t,s)$ and an inner product $\langle f, g \rangle_\omega = \int_{-\infty}^t f(s)g(s) \omega(t,s) \mathrm{d}s$. The theory proposes a parametrization for  $\A$ and $\B$ such that the hidden state $x_n(t)$ represents the projection of the input signal $u_{\leq t}$ onto the basis $p_n(t,s)$ i.e. $x_n(t) = \langle u_{\leq t}, p_n \rangle_\omega$. One of the fundamental results in HiPPO theory is that in the limit of an infinite hidden state size $N \rightarrow \infty$ and for an appropriate choice of $\A$ and $\B$, it is possible to reconstruct the input signal up to time $t$ from the hidden state $x(t)$:
\begin{equation}
\label{eq:reconstruction}
    u(s) = \sum_{n=0}^{\infty} x_n(t) p_n(t,s) \ \ \forall s\leq t
\end{equation}
In this work, we consider three specific HiPPO parameterizations of $\A$ and $\B$: LegT and LegS based on Legendre polynomials and FouT, which is based on a Fourier basis. The explicit constructions for $\A$ and $\B$ are given in Appendix \ref{parametrizations}.
\section{Solving Dynamical Systems with SSMs}
\label{sec:solving_dynamical_syemstems_with_SSMs}
As alluded to in the introduction, we study SSMs on autoregressive ICL, predicting the next value $f_{k+1}$ given $f_{\leq k}$ in context. For SSMs, this corresponds to predicting $u_{k+1}$ after iteratively observing the first $k$ values of the sequence $u_{\leq k}$. To progress towards this goal, we consider a continuous relaxation of the problem where we map the indices to instances in continuous time, i.e., $k \to t, k+1 \to t + \Delta t$. In the following, we denote with $u(t)$ the input in continuous time and with $u_{k}$ its discrete counterpart. Our aim in the continuous setting is, therefore, to predict $u(t + \Delta t)$, for which we consider the following integral expression:
\begin{equation}\label{eq: disc}
    u(t+\Delta t) = u(t) +\int_{t}^{t+\Delta t}\dot{u}(s) \mathrm{d}s
\end{equation}
To solve this, we take multiple steps: \textbf{(1)} we approximate $\dot{u}(t)$ by constructing a specific parameterization for $\C$ and $\D$ for continuous SSMs and a general basis $\{p_n\}_{n \geq 0}$ resulting in a model satisfying $\dot{u}(t) \approx y(t)=\C x(t)+\D u(t)$. Subsequently, \textbf{(2)} we show how to approximate the integral via discretization, bringing us back to the discrete SSM setting and our original problem of autoregressive ICL. Finally, \textbf{(3)}, we provide an asymptotic bound on the error incurred by approximating $\dot{u}(t)$ with a finite hidden state.
\par
\textbf{(1)}~~ By evaluating Equation \ref{eq:reconstruction} at time $t$, we get: $u(t) = \sum_{n=0}^{\infty} x_n(t) p_n(t,t)$. Under some technical assumptions we can exchange the series with the derivative\footnote[2]{If $f(x)=\sum_{n=0}^\infty f_n(x)$, then if $\sum_{n=0}^\infty \dot{f}_n(x)$ converges absolutely we have that $\dot{f}(x)=\sum_{n=0}^\infty \dot{f}_n(x)$. We assume this assumption holds throughout the rest of the paper.}. Noting that $p_n(t,t)$ is a constant with respect to $t$, we get $\dot{u}(t) = \sum_{n=0}^{\infty} \dot{x}_n(t) p_n(t,t)$. Through this, we establish a weight construction in Proposition \ref{prop:construction_legt_fout}, such that the continuous SSM approximates the gradient of the input signal. The following proposition is for the LegT and FouT bases. In Appendix \ref{app:legs} we further provide the result for the LegS basis.
\begin{restatable}[Construction of $\C$ and $\D$ for LegT and FouT]{prop}{ConstrLegTFouT}
    \label{prop:construction_legt_fout}
    If we choose $\C_j = \sum_{k = 0}^N\A_{kj}p_k(t, t)$ and $
    \D = \sum_{k = 0}^N \B_k p_k(t, t)$ and $\A$, $\B$ and $p_k(t,t)$ as in HiPPO LegT or FouT, then the output $y(t) =: \dot{u}_N(t)$ is an approximation of $\dot{u}(t)$ based on $N$ basis functions.
\end{restatable}
\begin{proof}
We first assume an infinite hidden state size $N = \infty$, then use the definition of $\dot{x}(t)$ and truncate the series to obtain the result.
    \begin{align*}
        \dot{u}(t) &= \sum_{k=0}^\infty \dot{x}_k(t)p_k(t,t) = \sum_{k=0}^\infty\left(\sum_{j=0}^\infty \A_{kj}x_j(t) + \B_k u(t)\right)p_k(t,t)\\
        &\approx \sum_{j = 0}^N\left(\sum_{k = 0}^N \A_{kj}p_k(t, t)\right)x_j(t) + \left(\sum_{k = 0}^N \B_k p_k(t, t)\right)u(t)\\ &=: \sum_{j = 0}^N \C_j x_j(t) + \D u(t)
     \end{align*}
\end{proof}
\textbf{(2)}~~ Since we cannot solve the integral in Equation \ref{eq: disc} in closed form, we approximate it using the bilinear method, which then brings us back to the discrete SSM setting and our original problem of autoregressive ICL.
Starting from Equation \ref{eq: disc}, we approximate $\dot{u}(t) \approx \C x(t)+\D u(t)$ and then apply the bilinear method. Here, $\C$ and $\D$ are the parametrizations of HiPPO LegT or FouT, as defined in Proposition \ref{prop:construction_legt_fout}.
\begin{align*}
    u(t+ \Delta t) - u(t) & \approx\int_t^{t+\Delta t} \C x(s)+\D u(s) \mathrm{d}s \\
    & \approx \frac{\Delta t}{2} \Big( \C x(t) + \D u(t) + \C x(t + \Delta t)  + \D u(t + \Delta t)\Big). 
    \end{align*}
Rearranging the equation, then applying the discrete-time mapping $t \to k, t+\Delta t \to k+1$ as previously described, and approximating $x_k \approx x_{k+1}$ for small $\Delta t$, we obtain:
\begin{align*} \label{CD}
    \widehat{u}_{k+1} &= \left( 1 - \frac{\D \Delta t}{2}\right)^{-1} \left[ \left( 1 + \frac{\D \Delta t}{2}\right)u_k + \frac{\Delta t}{2} \C  \left(x_k + x_{k+1}\right)\right]\\
    &\approx \left( 1 - \frac{\D \Delta t}{2}\right)^{-1} \left[ \left( 1 + \frac{\D \Delta t}{2}\right)u_k + \Delta t \C x_{k+1} \right]\\
    &= \overline{\C} x_{k+1} + \overline{\D} u_k
\end{align*}
This yields a discrete-time system, where the hidden state evolution is given by $x_{k+1} = \overline{\A} x_{k}+\overline{\B} u_k$. Here, $\overline{\A} = (I-\frac{\Delta t}{2}\A)^{-1}(I+\frac{\Delta t}{2}\A)$ and $\overline{\B} = \Delta t (I-\frac{\Delta t}{2}\A)^{-1}\B$ are the discretized versions of $\A$ and $\B$, respectively \citep{Gu2020Hippo}. The complete discretized system is expressed as:
\begin{equation}
\label{discretized_sys}
    \begin{cases}
    x_{k+1} = \overline{\A} x_{k}+\overline{\B} u_k \\
    \widehat{u}_{k+1}=\overline{\C} x_{k+1} + \overline{\D} u_k 
\end{cases}
\end{equation}
Crucially, the above system allows us to predict the value of the future input state $\widehat{u}_{k+1}$ based on the hidden state $x_{k}$ (which is a function of $u_{< k}$) and the input $u_k$, and hence perform autoregressive ICL. Unlike classical machine learning, which requires training for specific dynamical systems, our parametrization predicts future states of arbitrary sequences without task-specific fine-tuning.
\par
\textbf{(3)}~~ To further investigate the proposed parametrization for the continuous-time SSM, we provide an asymptotic bound on the error incurred when approximating $\dot{u}(t)$ with $\dot{u}_N(t)$ which is the output of the continuous SSM with a finite hidden state of dimension $N$. For this, we consider an alternative construction of the FouT basis in Proposition \ref{fout_altern} simplifying the analysis. The proof of Proposition \ref{fout_altern} is analogous to that of Proposition \ref{prop:construction_legt_fout} and can be found in Appendix \ref{App: Fout}.
\begin{restatable}[Alternative FouT Construction]{prop}{FouTAlt}
\label{fout_altern}
If we choose , $\C_k = \begin{cases} 0 & \text{if} \: k=0 \text{ or } k \text{ odd}\\ -2 \sqrt{2}\pi n & \text{otherwise} \quad \end{cases}$\\ $\D = 0$ and $\A$, $\B$ and $p_k(t,s)$ as in HiPPO FouT and if $u(t)$ has $k$-th bounded derivatives for some $k \geq 3$, then the output $y(t) =: \dot{u}_N(t)$ is an approximation of $\dot{u}(t)$ based on $N$ basis functions.
\end{restatable}
Using this, we show that the error $\left|\dot{u}(t) - \dot{u}_N(t)\right|$, decreases polynomially in the hidden state size $N$ and linearly depends on the Lipschitz constant $L$ of the $(k-1)$-th derivative.
\begin{restatable}[Approximation Error]{thm}{ApproxErr}
\label{local_approx_error}
    If $u$ has $k$-th bounded derivatives for some $k \geq 3$, i.e. $|u^{(k)}(t)| \leq L \ \forall t$ then it holds that for the choice of $\A$ and $\B$ in HiPPO FouT and $\C$ and $\D$ as in Prop. \ref{fout_altern}: $\left|\dot{u}(t) - \dot{u}_N(t)\right| \in \mathcal{O}(L/N^{k-2})$
\end{restatable}
The proof of this Theorem can be found in Appendix \ref{App: Fout}. From this result, we can derive a Corollary for the error of predicting $u(t)$ using $u_N(t) = \int_0^{t}\dot{u}_N(s)\mathrm{d}s$ in the continuous setting:
\begin{restatable}[Approximation Error]{cor}{ApproxErrCor}
 \label{cor:ApproxErrCor}
     Under the same assumptions and parametrization as in Theorem \ref{local_approx_error}, we have that:
      $\left|u(t) -u_N(t)\right| \in \mathcal{O}\left(Lt/N^{k-2}\right)$
\end{restatable}
We further note that having $u_N(t) = \int_0^t \dot{u}_N(s) \mathrm{d}s$ reflects how we calculate $u_{k+1}$ in practice, with the difference that here we do not consider an approximation of the integral. In particular, $u_N(t)$ can be seen as the equivalent continuous version of our estimator $\hat{u}_k$.
\section{Experiments}
\label{sec:experiments}
We perform a thorough experimental evaluation of the weight construction presented in Equation \ref{discretized_sys}. For this, we unroll the model step-by-step to predict $u_{k+1}$ given $u_{\leq k}$ and evaluate the performance using $\mathcal{L}(\theta) = \frac{1}{T - T_s} \sum_{k = T_s}^T \left|f_{\theta}(u_{k}, x_{k}) - u_{k+1} \right|$, where $\theta = \{\overline{\A}, \overline{\B}, \overline{\C}, \overline{\D}\}$ and $f_\theta$ is the parametrized SSM. Unless specified otherwise, we use \smash{$T = 10^4$} and $T_s=T/2$, providing the model with sufficiently long context.\par
\textbf{Ordinary Differential Equation}~~ 
 We compare our model's ability to predict the next state of an adapted  Van der Pol Oscillator in one variable: $\dot{u}(t) = \mu (1-u(t)^2) \sin(t)$, with $N=65$. As seen in \ref{subfig:van_der_pol}, both LegT and FouT achieve lower error in regions of lower curvature, consistent with the dependence on the Lipschitz constant established in Theorem \ref{local_approx_error}.\par
\textbf{White Signal and Filtered Noise Process}~~ Following \citet{Gu2020Hippo} we use band-limited and low pass filtered white noise \ref{subfig:example_signals}, which we refer to as White Signal a Filtered Noise respectively (see \cite{nengo}). For each, we use three progressively harder setups in which the high-frequency information is increased. The frequency content is controlled by $\alpha$ for Filtered Noise and by $\gamma$ for White Noise. Smaller $\alpha$ and larger $\gamma$ correspond to increased high-frequency content. We test the model over different hidden state sizes ranging from 1 to 96 in increments of 5. We find that using a larger hidden state dimension is beneficial up to a certain $N$, after which for LegT there is minimal to no further benefit. For FouT, performance initially worsens when increasing $N$ before improving. This is because, for low $N$, the model essentially copies its input, which is surpassed for larger $N$. Furthermore, the error is lower with less high-frequency content.
\par
\textbf{Learning from Context}~~ To empirically demonstrate our models' use of context, we examine how the error scales with increased context length. In \ref{subfig:time_dependence}, we find that for all hidden state sizes $N$, the model gets better with a longer context. Furthermore, more expressive models with larger hidden states exhibit longer oscillations, requiring more samples to stabilize. Intuitively, larger $N$ corresponds to bigger function classes that the model can represent.

\begin{figure*}[t]
\centering
\subfigure[\small FouT and LegT on Van der Pol]{
    \includegraphics[width=0.31\textwidth]{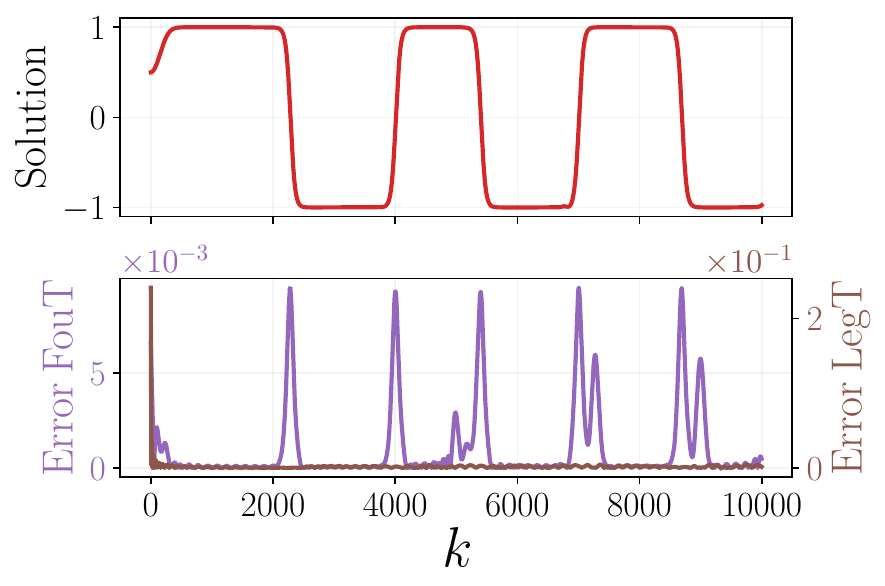}
    \label{subfig:van_der_pol}
}
\hfill
\subfigure[\small Error dependence on $N, \alpha$]{
    \includegraphics[width=0.31\textwidth]{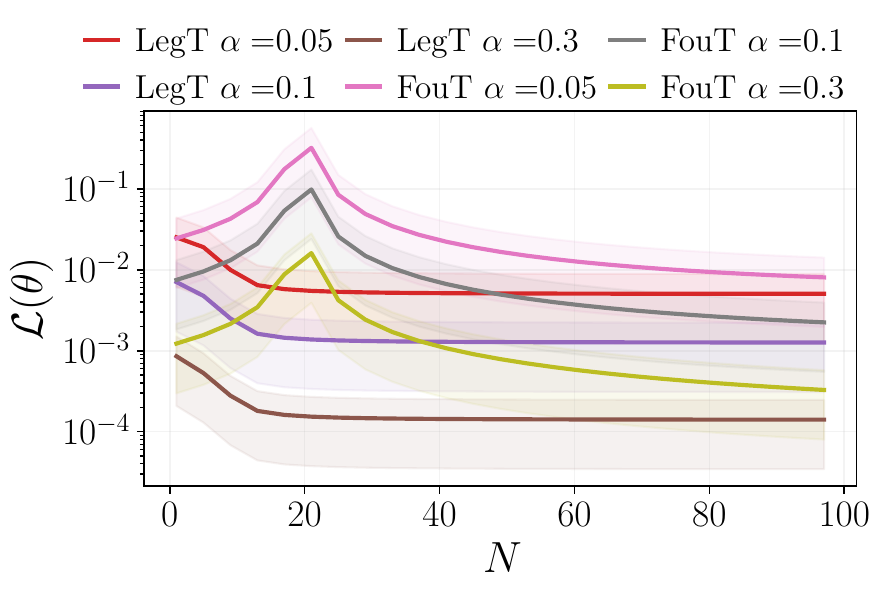}
    \label{subfig:filtered_error}
}
\hfill
\subfigure[\small Error dependence on $N, \gamma$]{
    \includegraphics[width=0.31\textwidth]{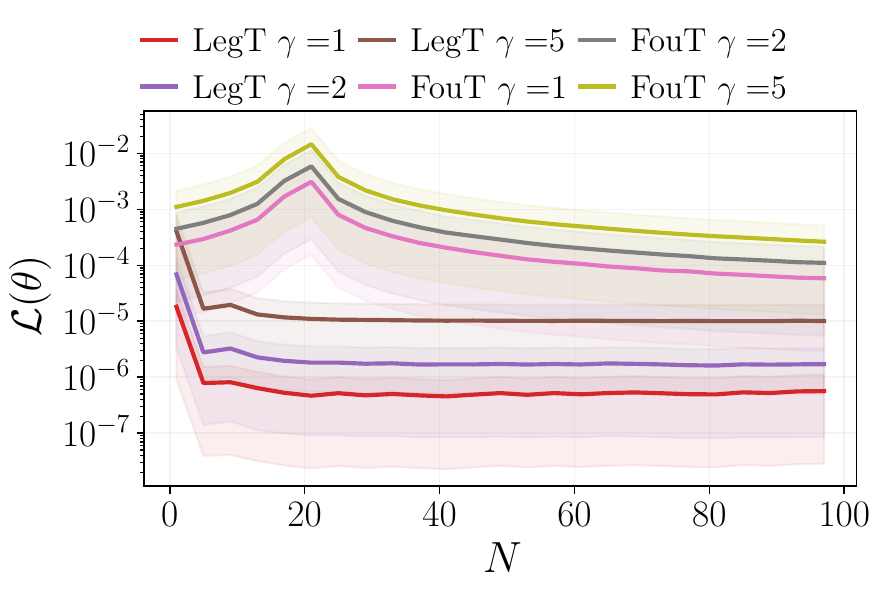}
    \label{subfig:whitenoise_error}
}

\vfill

\subfigure[\small Construction vs. trained SSM]{
    \includegraphics[width=0.31\textwidth]{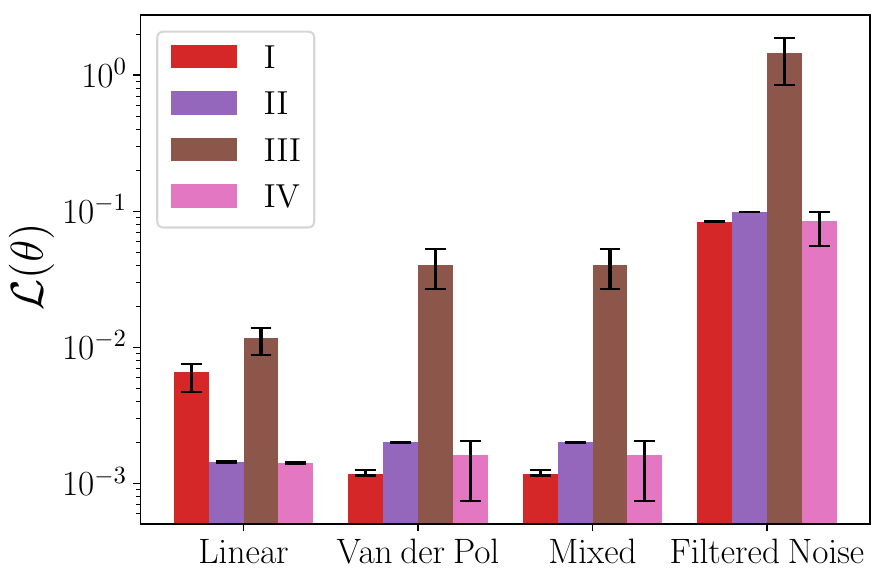}
    \label{subfig:construction_vs_training}
}
\hfill
\subfigure[\small LegT error dependence on $k$]{
    \includegraphics[width=0.32\textwidth]{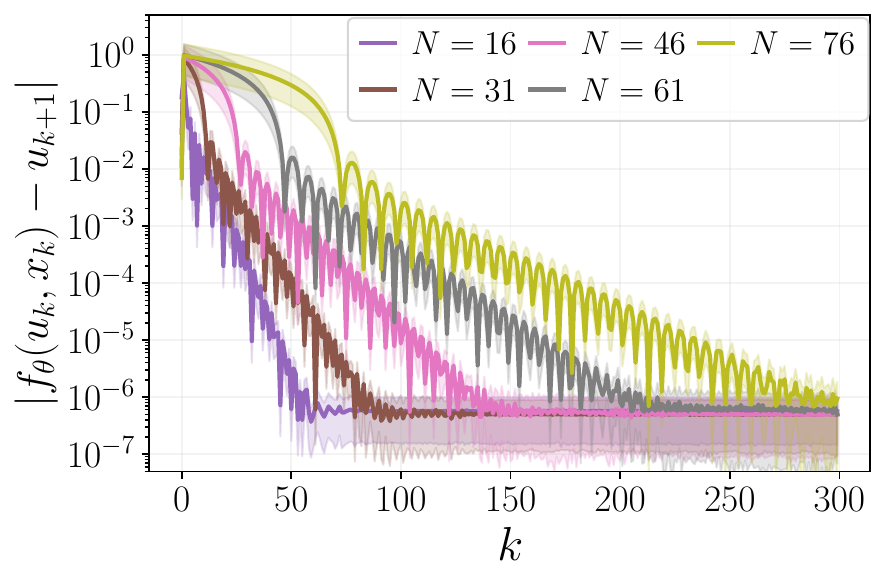}
    \label{subfig:time_dependence}
}
\hfill
\subfigure[\small Filtered Noise \& White Signal]{
    \includegraphics[width=0.31\textwidth]{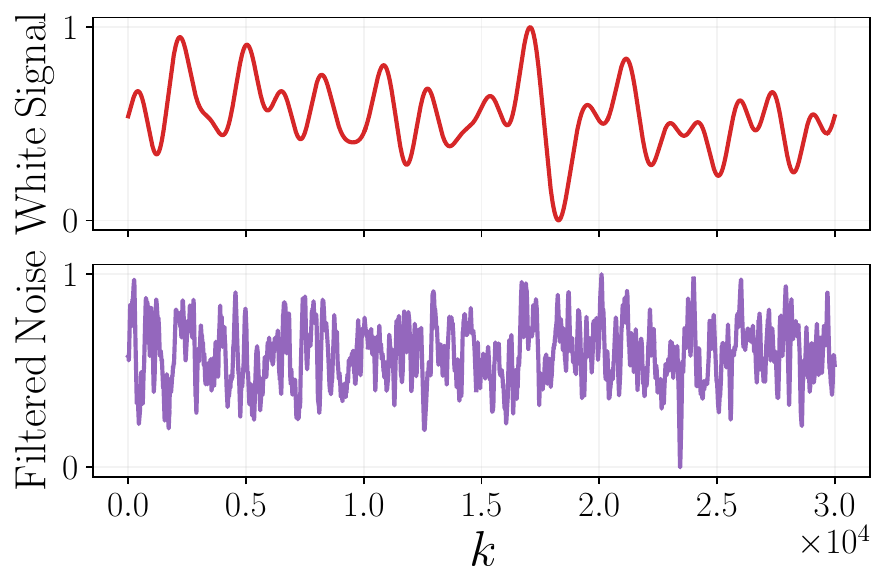}
    \label{subfig:example_signals}
}

\caption[Short Caption]{Empirical evaluation of our weight construction. \subfigref{subfig:van_der_pol} Van der Pol oscillator solution and errors $|f_\theta(u_k,x_k) - u_{k+1}|$ of FouT and LegT weight construction. \subfigref{subfig:filtered_error} Error dependence on $N$ and $\alpha$ for the Filtered Noise dataset (mean across 1k functions and 1 std. plotted). \subfigref{subfig:whitenoise_error} Error dependence on $N$ and $\gamma$ for the White Signal dataset (mean across 1k functions and 1 std. plotted). \subfigref{subfig:construction_vs_training} Performance comparison of weight construction for LegT: $(\mathrm{I})$ Initializing $\overline{\A},\overline{\B},\overline{\C},\overline{\D}$ at construction and training $\overline{\C},\overline{\D}$. $(\mathrm{II})$ Fixing $\overline{\A},\overline{\B},\overline{\C},\overline{\D}$ at construction. $(\mathrm{III})$ Fixing $\overline{\A},\overline{\B}$ at construction, standard Gaussian initialization and training of $\overline{\C},\overline{\D}$. $(\mathrm{IV})$ Initializing $\overline{\A},\overline{\B},\overline{\C},\overline{\D}$ at construction and training all of them (mean over 3 random seeds and error bars correspond to min and max). \subfigref{subfig:time_dependence} Error dependence of LegT on the context length $k$ on the White Signal Dataset (mean across 1k functions and 1 std. plotted). \subfigref{subfig:example_signals} Examples of the Filtered Noise ($\alpha = 0.05$) and White Signal ($\gamma = 2$) datasets}
\label{fig:label}
\end{figure*}

\textbf{Constructions as Initialisation}~~ \looseness=-1 In \ref{subfig:construction_vs_training}, we compare the performance of SSM layers in different settings. $(\mathrm{I})$ Initializing $\overline{\A}$, $\overline{\B}$, $\overline{\C}$, $\overline{\D}$ at construction and training $\overline{\C},\overline{\D}$. $(\mathrm{II})$ Fixing $\overline{\A}$, $\overline{\B}$, $\overline{\C}$, $\overline{\D}$ at construction. $(\mathrm{III})$ Fixing $\overline{\A},\overline{\B}$ at construction, standard Gaussian initialization and training of $\overline{\C},\overline{\D}$. $(\mathrm{IV})$ Initializing $\overline{\A},\overline{\B},\overline{\C},\overline{\D}$ at construction and training all of them. We train the models on a mixed dataset, consisting of White Signal, Legendre Polynomials, and sums of sine functions (see Appendix \ref{subsec:experiment3} for further details). We evaluate these models on 3 hold-out datasets, namely Filtered Noise, a holdout mixed dataset, linear functions, and the Van der Pol oscillator from \ref{subfig:van_der_pol}. We observe that initializing the SSM with our parametrization $(\mathrm{I}, \mathrm{II}, \mathrm{IV})$ leads to enhanced predictive performance over random initialization $(\mathrm{III})$. More so, training the model using gradient methods $(\mathrm{I}, \mathrm{III}, \mathrm{IV})$ does not result in increased performance over our weight construction $(\mathrm{II})$.\looseness=-1
\section{Conclusions}
In this work, we propose a novel SSM construction that can predict the next state of any dynamical system from its history without fine-tuning its parameters. To the best of our knowledge, this is the first time it was theoretically shown that SSMs can perform autoregressive ICL. We find that our weight construction allows SSMs to effectively leverage context to make predictions and that it can serve as a good initialization.\par 
This work serves as an initial step towards understanding the ICL capabilities of GSSMs and opens several avenues for future research. Investigating how gating mechanisms in modern GSSMs like Mamba \cite{gu2023mamba} and Griffin \cite{de2024griffin} affect ICL capabilities is one potential direction. Another is examining the impact of fully connected layers and non-linearities following SSM blocks. Lastly, we observed instabilities when predicting multiple steps into the future. Exploring methods to improve multi-step predictions and understanding the instabilities could also be valuable.

\bibliography{main}
\newpage
\clearpage
\appendix
\section{Parametrizations for LegT, LegS and FOuT}
\label{parametrizations}
\subsection{HiPPo-LegT}
As mentioned in the main text, HiPPO-LegT uses Legendre polynomials as its basis functions. In particular, we have that $\omega(t,s)= \frac{1}{\theta}\mathbb{I}_{[t-\theta, t]}(s)$ and $p_n(t,s)=\sqrt{2n+1}P_n(2\frac{s-t}{\theta}+1)$ where $P_n$ is the $n$-th Legendre polynomial. This leads to the following choice for $\A$ and $\B$:
\begin{align}
    \A_{nk} = \frac{1}{\theta} \begin{cases}
        (-1)^{n-k}(2n+1) \ \ \ \text{if $n \geq k$}\\
        (2n+1) \ \ \ \text{if $n < k$}
    \end{cases} 
    \ \ \ \ \ \ 
    \B_n=\frac{1}{\theta}(2n+1)(-1)^n
\end{align}
\subsection{HiPPo-LegS}
 HiPPO-LegS also uses Legendre polynomials as its basis functions. However, here we have that $\omega(t,s)= \frac{1}{t}\mathbb{I}_{[0, t]}(s)$ and $p_n(t,s)=\sqrt{2n+1}P_n(\frac{2s}{t}-1)$ where $P_n$ again is the $n-$th Legendre polynomial. Note that whereas the measure of LegT is translation invariant, the measure of LegS is not. This causes the system to become Time-Varying and in particular this leads to having $\dot{x}(t)=-\frac{1}{t}\A x(t)+\frac{1}{t}\B u(t)$ with the following choice for $\A$ and $\B$:
\begin{align}
    \A_{nk} =  \begin{cases}
        \sqrt{2n+1}\sqrt{2k+1} \ \ \ \text{if $n > k$}\\
        (n+1) \ \ \ \text{if $n = k$}\\
        0 \ \ \ \text{o.w.}
    \end{cases} 
    \ \ \ \ \ \ 
    \B_n=\sqrt{2n+1}
\end{align}
\subsection{HiPPO-FouT}
HiPPO-FouT, differently from LegT and LegS uses the classical Fourier basis and in particular it assumes $\{p_n\}_{n \geq 0}(t,s) = \sqrt{2} \Big [1 \ \cos(2\pi [1-(t-s)]) \ \sin(2\pi [1-(t-s)]) \ \cos(4\pi [1-(t-s)]) \ \sin(4\pi [1-(t-s)]) \ \dots \Big]$ and $\omega(t,s)= \mathbb{I}_{[t-1, t]}(s)$. This leads to the following choice of $\A$ and $\B$:
\begin{align}
    \A_{nk} = 
    \begin{cases}
        -2 \ \ \ \text{if $n=k=0$}\\
        -2\sqrt{2} \ \ \ \text{if $n=0$ and $k$ odd or $k=0$ and $n$ odd}\\
        -4 \ \ \ \text{if $n$ odd and $k$ odd}\\
        2\pi n \ \ \ \text{if $n-k=1$ and $k$ odd}\\
        -2\pi k \ \ \ \text{if $k-n=1$ and $n$ odd}\\
        0 \ \ \ \text{o.w.}
    \end{cases}
    \ \ \ \ 
    \B_n= 
    \begin{cases}
        2 \ \ \ \text{if $n=0$}\\
        2 \sqrt{2} \ \ \ \text{if $n$ odd}\\
        0 \ \ \ \text{o.w.}
    \end{cases}
\end{align}
\section{Proofs}
\label{proofs}
\subsection{Construction of \texorpdfstring{$\C$}{} and \texorpdfstring{$\D$}{} for LegT and FouT}
\label{app:legt_fout_constr}
\ConstrLegTFouT*
\begin{proof} We first assume that the hidden state has infinite dimension (i.e. $N = \infty$), such that we have perfect reconstruction of $\dot{u}(t)$:
\begin{align*}
    \dot{u}(t) &= \sum_{k = 0}^\infty \dot{x}_k(t) p_k(t, t)\\
    &= \sum_{k = 0}^\infty\left( \sum_{j = 0}^\infty \A_{kj}x_j(t) + \B_k u(t)\right) p_k(t, t) \ \ \ \  \ \text{by the definition of $\dot{x}(t)$}\\
    &\approx \sum_{k = 0}^N\left( \sum_{j = 0}^N \A_{kj}x_j(t) + \B_k u(t)\right) p_k(t, t)\ \ \ \  \ \text{approximating to finite hidden dimension} \\
    &= \sum_{j = 0}^N\left(\sum_{k = 0}^N \A_{kj}p_k(t, t)\right)x_j(t) + \left(\sum_{k = 0}^N \B_k p_k(t, t)\right)u(t) \\
    &= \sum_{j = 0}^N \C_j x_j(t) + \D u(t)
\end{align*}
Therefore, we have:
\begin{align}
    \C_j = \sum_{k = 0}^N\A_{kj}p_k(t, t)\\
    \D = \sum_{k = 0}^N \B_k p_k(t, t)
\end{align}
\end{proof}
\subsection{Construction of \texorpdfstring{$\C$}{} and \texorpdfstring{$\D$}{} for LegS}
\label{app:legs}
\begin{restatable}[LegS construction]{prop}{LegS}
    \label{prop:LegS}
     If we choose $\C_j = \sum_{k = 0}^N\frac{1}{t}\A_{kj}p_k(t, t)$ and $
    \D = \sum_{k = 0}^N \frac{1}{t}\B_k p_k(t, t)$ and $\A$, $\B$ and $p_k(t,t)$ as in HiPPO LegS, then the output $y(t)$ is an approximation of $\dot{u}(t)$ based on $N$ basis functions.
 \end{restatable}
\begin{proof}
Now $\A$ and $\B$ represent the parametrizations of HiPPO LegS. The proof exactly follows the same steps as the one above, by first assuming infinite hidden dimension and then approximating to finite dimension $N$.
\begin{align*}
    \dot{u}(t) &= \sum_{k = 0}^\infty \dot{x}_k(t) p_k(t, t)\\
    &= \sum_{k = 0}^\infty\left( \sum_{j = 0}^\infty \frac{1}{t}\A_{kj}x_j(t) + \frac{1}{t}\B_k u(t)\right) p_k(t, t)\\
    &\approx \sum_{k = 0}^N\left( \sum_{j = 0}^N \frac{1}{t}\A_{kj}x_j(t) + \frac{1}{t}\B_k u(t)\right) p_k(t, t)\\
    &= \sum_{j = 0}^N\left(\sum_{k = 0}^N \frac{1}{t}\A_{kj}p_k(t, t)\right)x_j(t) + \left(\sum_{k = 0}^N \frac{1}{t}\B_k p_k(t, t)\right)u(t)\\
    &= \sum_{j = 0}^N \C_j x_j(t) + \D u(t)
\end{align*}
Therefore, we have:
\begin{align}
    \C_j = \sum_{k = 0}^N \frac{1}{t}\A_{kj}p_k(t, t)\\
    \D = \sum_{k = 0}^N \frac{1}{t}\B_k p_k(t, t)
\end{align}
\end{proof}
\section{Discretization}
\subsection{Discretization of \texorpdfstring{$\C$}{} and \texorpdfstring{$\D$}{}}
\label{app:discretization}
The trapezoid (or bilinear) method is one of the most widely used methods in numerical analysis, used in general to approximate integrals by:
\begin{equation}
    \int_a^b f(s) \mathrm{d}s \approx \frac{b-a}{2}\big(f(a)+f(b)\big)
\end{equation}
In practice, we are approximating the integral between $a$ and $b$ with the area of the trapezoid with height $b-a$ and parallel sides $f(a)$ and $f(b)$, which leads to a good approximation if $b-a$ is small.\\
In our setting, we have that:
\begin{align}
    u(t+ \Delta t) - u(t) &=\int_t^{t+\Delta t} \dot{u}(s) \mathrm{d}s \\
    &= \int_t^{t+\Delta t} \C x(s)+\D u(s) \mathrm{d}s \ \ \ \ \text{by using our construction for $\dot{u}(s)$}\\
    & \approx \frac{\Delta t}{2} \Big( \C x(t) + \D u(t) + \C x(t + \Delta t)  + \D u(t + \Delta t)\Big) 
\end{align}
where in the last step we used the trapezoid rule presented above. Then, by rearranging the terms, we get:
\begin{align}
    u(t + \Delta t) &= \left( 1 - \frac{\D \Delta t}{2}\right)^{-1} \left[ \left( 1 + \frac{\D \Delta t}{2}\right)u(t) + \frac{\Delta t}{2} \C  \left(x(t) + x(t + \Delta t)\right)\right]\\
    &\approx \left( 1 - \frac{\D \Delta t}{2}\right)^{-1} \left[ \left( 1 + \frac{\D \Delta t}{2}\right)u(t) + \Delta t \C x(t+\Delta t) \right]\\
    &= \overline{\C} x(t+\Delta t)+ \overline{\D} u(t)
\end{align}
where we approximated $x(t) \approx x(t+\Delta t)$ and where $\overline{\C} = \Delta t \left( 1 - \D \Delta t/2\right)^{-1} \C$ and $\overline{\D} = \left( 1 - \D \Delta t/2\right)^{-1}\left( 1 + \D \Delta t/2\right)$.\\
We then map the discretized timestep to indices, e.g. if $t \rightarrow k$ then we map $t+\Delta t \rightarrow k+1$, which leads to:
\begin{equation*}
    u_{k+1} = \overline{\C} x_{k+1} + \overline{\D} u_k
\end{equation*}
Note that this discretization scheme works for all HiPPO LegT, LegS and FouT.
\subsection{Discretization of Continuous Linear Systems}
\label{linear_discretization}
Given a continuous time dynamical linear system of the form:
\begin{equation*}
    \begin{cases}
        \dot{x}(t) = \A x(t)+\B u(t) \\
        y(t) = \C x(t)+\D u(t)
    \end{cases}
\end{equation*}
In order to be able to use this system in practice and experiments we need to discretize it. There are many methods used to this end, like Forward Euler, Bilinear or FoH. By employing them, we arrive at a discrete time dynamical system of the form:
\begin{equation}
\label{eq:discr_sys}
    \begin{cases}
        x_{k+1} = \overline{\A} x_k+\overline{\B} u_k \\
        y_k = \overline{\C} x_{k+1}+\overline{\D} u_k
    \end{cases}
\end{equation}
Where, in the case of the bilinear method, which we use in this work, we have $\overline{\A} = (I-\frac{\Delta t}{2}\A)^{-1}(I+\frac{\Delta t}{2}\A)$, $\overline{\B} = \Delta t (I-\frac{\Delta t}{2}\A)^{-1} \B$, $\overline{\C} = (I-\frac{\Delta t}{2}\A)^{-\top} \C^\top$ and $\overline{\D} = \D +\frac{1}{2} \C^\top \overline{\B}$.
\par
Note that in our setting this would imply discretizing $\C$ and $\D$ twice, once for approximating $\dot{u}(t)$ to predict the next value of $u(t)$ (see Appendix \ref{app:discretization}) and once for discretizing the full system as in Equation \ref{eq:discr_sys}. We find that only employing the first one works better in practice then using both of them. Hence, in our work, we discretize $\A$ and $\B$ according to the bilinear method and $\C$ and $\D$ only according to the discretization shown in Appendix \ref{app:discretization}.
\section{Alternative FouT construction}
\begin{lemma}
\label{lemma:bound_x}
    If $u(t)$ has $k$-th bounded derivatives, i.e. $|u^{(k)}(t)| \leq L \ \forall t$, then we have that $|x_n^s(t)| \leq \frac{L}{(2\pi n)^k}$ and $|x_n^c(t)| \leq \frac{L}{(2\pi n)^k}$.
\end{lemma}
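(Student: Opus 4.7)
The plan is to recognize $x_n^c(t)$ and $x_n^s(t)$ as windowed Fourier coefficients of $u$ on the interval $[t-1,t]$ and to obtain the polynomial decay in $n$ via $k$ rounds of integration by parts. Starting from the HiPPO FouT basis functions together with the measure $\omega(t,s) = \mathbb{I}_{[t-1,t]}(s)$, and substituting $\tau = s - (t-1) \in [0,1]$, the hidden coordinates take the form
\[
x_n^c(t) = \sqrt{2}\int_0^1 u(t-1+\tau)\cos(2\pi n\tau)\,d\tau, \qquad x_n^s(t) = \sqrt{2}\int_0^1 u(t-1+\tau)\sin(2\pi n\tau)\,d\tau.
\]

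The key structural observation is that the window has length exactly one, which is precisely the period of the lowest non-constant Fourier element, so for every $n \geq 1$ both $\sin(2\pi n\tau)$ and $1 - \cos(2\pi n\tau)$ vanish at $\tau = 0$ and $\tau = 1$. I would iterate integration by parts $k$ times, at each step choosing an antiderivative of the current oscillatory kernel that vanishes at both endpoints — concretely, $\sin(2\pi n\tau)/(2\pi n)$ as the antiderivative of $\cos(2\pi n \tau)$, and $(1 - \cos(2\pi n\tau))/(2\pi n)$ as the antiderivative of $\sin(2\pi n\tau)$. Each step pulls out a factor $1/(2\pi n)$, transfers one derivative from the kernel onto $u$, and contributes no boundary term thanks to this endpoint-vanishing choice.

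After $k$ such rounds, the expression reduces to $(2\pi n)^{-k}\int_0^1 u^{(k)}(t-1+\tau)\,K_n(\tau)\,d\tau$, where $K_n$ is an oscillatory kernel whose $L^\infty$ norm on $[0,1]$ remains bounded by an absolute constant independent of $n$. A single application of the triangle inequality with $|u^{(k)}(\cdot)| \leq L$ and $\|K_n\|_\infty \leq 1$ (after absorbing any harmless absolute constant into the statement) gives $|x_n^c(t)|,\, |x_n^s(t)| \leq L/(2\pi n)^k$, which is the desired bound.

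The main obstacle I anticipate is the bookkeeping in the iterated integration by parts: one has to verify by induction that at every intermediate stage there exists an antiderivative of the running kernel that both vanishes at $\tau = 0, 1$ (keeping the next IBP boundary-free) and has uniformly bounded $L^\infty$ norm on $[0,1]$ after the factor $1/(2\pi n)$ has been extracted. This is the familiar Bernoulli-polynomial style argument for Fourier coefficients on an interval of length equal to one full period of the kernel; while conceptually straightforward, it is the step that pins down the explicit constants and justifies why no lower-order boundary contributions remain.
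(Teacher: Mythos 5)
Your overall strategy---realizing $x_n^c$ and $x_n^s$ as windowed Fourier coefficients on $[t-1,t]$ and extracting powers of $1/(2\pi n)$ by $k$ rounds of integration by parts---is the natural self-contained version of what the paper does: the paper simply invokes Theorem 7 of \citep{gu2022train} for the identity $x_n^{s}(t)=\frac{1}{(2\pi n)^k}\int_0^1 u^{(k)}(\cdot)\,p_n(\cdot)$ and finishes with the triangle inequality exactly as you do. The difference is that by trying to prove that identity yourself, you expose the step that carries all the content, and that step fails. Your induction hypothesis is that at every stage the running kernel admits an antiderivative vanishing at both $\tau=0$ and $\tau=1$; such an antiderivative exists if and only if the kernel has zero mean on $[0,1]$. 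This holds for $\cos(2\pi n\tau)$ and $\sin(2\pi n\tau)$, but the endpoint-vanishing antiderivative of $\sin(2\pi n\tau)$ is $(1-\cos(2\pi n\tau))/(2\pi n)$, whose mean is $1/(2\pi n)\neq 0$. Hence at the very next round no boundary-free choice exists: $\int_0^1 u'(t-1+\tau)\bigl(1-\cos(2\pi n\tau)\bigr)\,\mathrm{d}\tau = u(t)-u(t-1)-\int_0^1 u'(t-1+\tau)\cos(2\pi n\tau)\,\mathrm{d}\tau$, and the term $u(t)-u(t-1)$ acquires no further factor of $1/(2\pi n)$. The iteration therefore stalls after one power of $1/n$ for $x_n^s$ (and two for $x_n^c$), not $k$.

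This is not a bookkeeping issue that a more careful induction can repair: windowed Fourier coefficients of a function that does not match periodically across the window genuinely decay only like $1/n$. For instance $u(s)=s$ gives $x_n^s(t)=-\sqrt{2}/(2\pi n)$ exactly, while $u^{(k)}\equiv 0$ for all $k\ge 2$, so the claimed bound $|x_n^s(t)|\le L/(2\pi n)^k$ would force $x_n^s=0$. Any proof of the stated decay must import additional boundary hypotheses (periodic matching of $u$ and its first $k-1$ derivatives across the window, i.e.\ vanishing of the differences $u^{(j)}(t)-u^{(j)}(t-1)$), which is precisely what is hidden inside the citation the paper relies on. You should either adopt that citation as the paper does (inheriting its hypotheses) or state and use those boundary conditions explicitly; as written, the key inductive claim is false and the argument does not go through.
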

\begin{proof}
From Theorem 7 in \citep{gu2022train}, we have that if $u$ has $k$-th bounded derivatives, it holds that:
\begin{align*}
    |x_n^s(t)|  &\leq \left|\frac{1}{(2\pi n)^k}\int_0^1 u^{(k)}(t)p_{n}(s-t)\mathrm{d}t\right|\\
    &\leq \frac{1}{(2\pi n)^k}\int_0^1 \big|u^{(k)}(t)\big| \big|p_{n}(s-t)\big|\mathrm{d}t\\
    &\leq \frac{L}{(2\pi n)^k}
\end{align*}
Similarly, under the same assumptions, it holds that:
\begin{align*}
    |x_n^c(t)| & \leq \left|\frac{1}{(2\pi n)^k}\int_0^1 u^{(k)}(t)p_{n}(s-t)\mathrm{d}t\right|\\
    &\leq \frac{1}{(2\pi n)^k}\int_0^1 \big|u^{(k)}(t)\big| \big|p_{n}(s-t)\big|\mathrm{d}t\\
    &\leq \frac{L}{(2\pi n)^k}
\end{align*}
\end{proof}
\label{App: Fout}
\FouTAlt*
\begin{proof} As always, we start from:
\begin{equation*}
    u(s) = \sum_{n=0}^\infty x_n(t) p_n(t,s) \: \: \forall s \leq t
\end{equation*}
where in the case of Fourier basis, we denote: 
\begin{align*}
    \{p_n\}_{n\geq 0} &= \sqrt{2}\begin{bmatrix}
    1 & \cos(2\pi t) & \sin(2\pi t) & \cos(4\pi t) & \sin(4\pi t) & \dots
\end{bmatrix}^\top\\
x(t) &= \begin{bmatrix}
    x_0(t) & x_1^c(t) & x_1^s(t) & x_2^c(t) & x_2^s(t) & \dots
\end{bmatrix}^\top \in \mathbb{R}^{2N+1}
\end{align*}
Hence, we have that for all $s \leq t$:
\begin{equation*}
    u(s) = \sqrt{2}x_0(t) + \sqrt{2}\sum_{n=1}^\infty x_n^c(t) \cos\left(2\pi n \big[(t-s) + 1\big]\right) + \sqrt{2}\sum_{n=1}^\infty x_n^s(t) \sin\left(2\pi n \big[(t-s) + 1\big]\right)
\end{equation*}
Assuming $u$ has k-th bounded derivative for some $k \geq 3$, it follows from Lemma \ref{lemma:bound_x} that both \\
$\sum_{n=1}^\infty n x_n^c(t) \sin\left(2\pi n \big[(t-s)+1\big]\right)$ and $\sum_{n=1}^\infty n x_n^s(t) \cos\left(2\pi n \big[(t-s) + 1\big]\right)$ absolutely converge and hence we can exchange the series with the derivative and then we have for all $s < t$:
\begin{equation*}
    \dot{u}(s) = 2\sqrt{2}\pi\sum_{n=1}^\infty n x_n^c(t) \sin\left(2\pi n \big[(t-s) + 1\big]\right) - 2\sqrt{2}\pi\sum_{n=1}^\infty n x_n^s(t) \cos\left(2\pi n \big[(t-s) + 1\big]\right)
\end{equation*}
By assuming continuity of $\dot{u}$ (which holds since $u$ is $k$-times differentiable with $k \geq 3$), we can take the limit as $s$ goes to $t$ to extend the derivative and get:
\begin{equation*}
    \dot{u}(t) = -2\sqrt{2}\pi \sum_{n=1}^\infty n x_n^s(t) \approx \sum_{k=0}^{N}\C_k x_k(t)
\end{equation*}
Thus, we have that:
\begin{align*}
    \C_k &= \begin{cases}
        0 & \text{if} \: k=0 \text{ or } k \text{ odd}\\
        -2\sqrt{2}\pi k & \text{otherwise} \quad
    \end{cases}\\
    \D &= 0
\end{align*}
\end{proof}
\subsection{Approximation Error}
\ApproxErr*
\begin{proof} 
Let's recall the definition of $\dot{u}_N$:
\begin{equation*}
\dot{u}_N(t) := -\sqrt{2} \sum_{n=1}^{N-1} 2\pi n x_n^s(t)
\end{equation*}
Using Lemma \ref{lemma:bound_x}, we can now proceed to our bound. Assuming $k \geq 3$ to guarantee convergence of the infinite series, we get:
\begin{align*}
     \left|\dot{u}(t) - \dot{u}_N(t)\right| &= \left|-\sqrt{2} \sum_{n=N}^{\infty} 2\pi n x_n^s(t)\right|\\
     &\leq 2\sqrt{2}\pi \sum_{n=N}^{\infty} n \left|x_n^s(t)\right|\\
     &\leq 2\sqrt{2}\pi \sum_{n=N}^{\infty}n\frac{L}{(2\pi n)^k}\\
     &\in \mathcal{O}\left(\frac{L}{N^{k-2}}\right)
\end{align*}
\end{proof}
\ApproxErrCor*
\begin{proof}
This bound simply follows from Theorem \ref{local_approx_error} and the fact that $u_N(t) = \int_0^t \dot{u}_N(s) ds$:
\begin{align*}
     \left|u(t) - u_N(t)\right| &= \left| \int_0^t \dot{u}(t) - \dot{u}_N(t)\mathrm{d}t\right|\\
     &\leq \int_0^t \left|\dot{u}(t) - \dot{u}_N(t)\right|\mathrm{d}t\\
     &\in \mathcal{O}\left(\frac{Lt}{N^{k-2}}\right)
\end{align*}
Note that having $u_N(t) = \int_0^t \dot{u}_N(s) \mathrm{d}s$ reflects how we calculate $u(t+\Delta t)$ in practice, with the difference that here we do not consider an approximation of the integral.
\end{proof}
\section{Additional Experimental Details}
\label{App:exp}
\subsection{Additional Experiments}
\label{app:nengo}
\subsubsection{White Signal and Filtered Noise}
The White Signal and Filtered Noise datasets were generated via Nengo's \cite{nengo} White Signal and Filtered Noise functions. To further investigate the performance of FouT and LegT, we sample 100 different functions at random and evaluate both weight constructions. We report mean and standard deviation of the MSE for the two constructions in Table \ref{tab:wn_fn_table}. We compare state dimension $N=33$ and $N=65$ with 10'000 time steps per signal. For White Signal, we compare cut-off frequencies $\gamma = 0.3$, $1$, and $2$. For Filtered Noise, we use the Alpha filter \cite{nengo} with parameters $\alpha = 0.05$, $0.1$, and $0.3$. As $\gamma$ increases or $\alpha$ decreases, the resulting functions become more oscillatory and challenging to approximate (See Figure \ref{fig:fun_approx}). Note that we have shifted the ground truth function up by 0.1 to distinguish it from the model's predictions.  Functions from the White Signal process are smoother and easier to approximate, while Filtered Noise generates rougher, discontinuous functions. Unsurprisingly, our models perform better on White Signal data.
\begin{figure}[t]
    \includegraphics[width=0.33\linewidth]{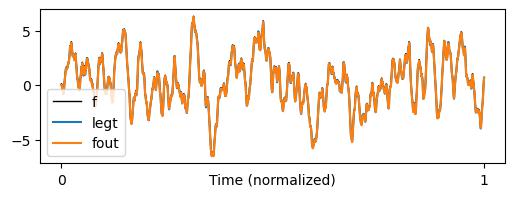}\hfill
    \includegraphics[width=0.33\linewidth]{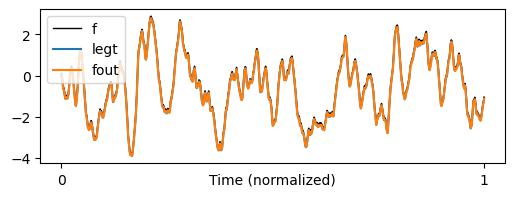}\hfill
     \includegraphics[width=0.33\linewidth]{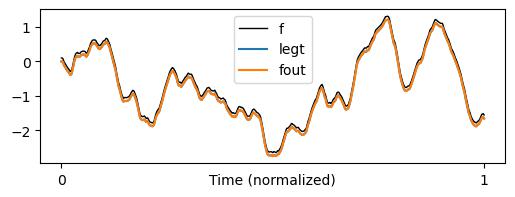}\hfill
     \includegraphics[width=0.33\linewidth]{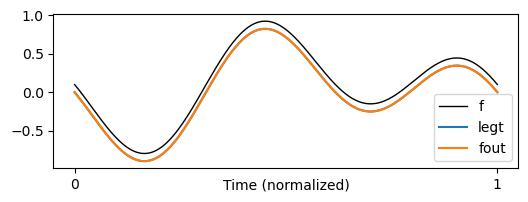}\hfill
     \includegraphics[width=0.33\linewidth]{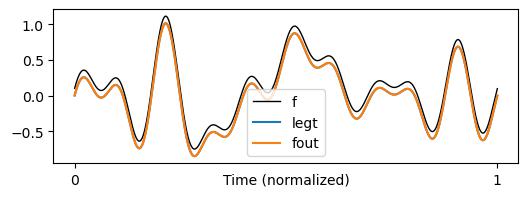}\hfill
    \includegraphics[width=0.33\linewidth]{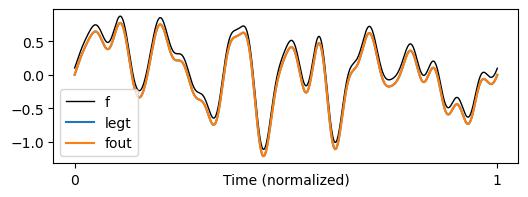}
     \caption{(Up) Functions sampled from Filtered Noise with $\alpha=0.05$, $0.1$ and $0.3$ (from left to right) and predictions from LegT and FouT. (Bottom) Functions sampled from White Signal with $\gamma=0.3$, $1$ and $2$ (from left to right) and predictions from LegT and FouT. In both cases we use $N=65$}
     \label{fig:fun_approx}
 \end{figure}
\begin{table}[t]
    \centering
    \scriptsize
    \begin{tabular}{c|c|c|c|c}
    & LegT ($N = 33$) & FouT ($N = 33$) & LegT ($N = 65$) & FouT ($N = 65$) \\ \hline
    White Signal ($\gamma = 0.3$) & $3.5 \pm 4.2$ $(1\mathrm{e}{-}11)$ & $6.8 \pm 5.8$ $(1\mathrm{e}{-}8)$ & $\mathbf{1.2 \pm 1.4}$ $\mathbf{(1\mathrm{e}{-}11)}$ & $6.9 \pm 5.8$ $(1\mathrm{e}{-}8)$ \\
    White Signal ($\gamma=1$) & $2.9 \pm 2.8$ $(1\mathrm{e}{-}7)$ & $2.1 \pm 1.2$ $(1\mathrm{e}{-}6)$ & $\mathbf{2.0 \pm 2.5}$ $\mathbf{(1\mathrm{e}{-}10)}$ & $2.1 \pm 1.2$ $(1\mathrm{e}{-}6)$ \\
    White Signal ($\gamma=2$) & $1.2 \pm 0.6$ $(1\mathrm{e}{-}5)$ & $8.6 \pm 3.7$ $(1\mathrm{e}{-}6)$ & $\mathbf{6.3 \pm 5.4}$ $\mathbf{(1\mathrm{e}{-}7)}$ & $8.7 \pm 3.8$ $(1\mathrm{e}{-}6)$ \\
    Filtered Noise ($\alpha=0.05$) & $2.1 \pm 0.2$ $(1\mathrm{e}{-}3)$ & $1.7 \pm 0.1$ $(1\mathrm{e}{-}3)$ & $2.8 \pm 0.3$ $(1\mathrm{e}{-}3)$ & $\mathbf{1.5 \pm 0.1}$ $\mathbf{(1\mathrm{e}{-}3)}$ \\
    Filtered Noise ($\alpha=0.1$) & $2.4 \pm 0.3$ $(1\mathrm{e}{-}4)$ & $1.9 \pm 0.2$ $(1\mathrm{e}{-}4)$ & $2.6 \pm 0.3$ $(1\mathrm{e}{-}4)$ & $\mathbf{1.8 \pm 0.2}$ $\mathbf{(1\mathrm{e}{-}4)}$ \\
    Filtered Noise ($\alpha=0.3$) & $5.0 \pm 1.0$ $(1\mathrm{e}{-}6)$ & $6.4 \pm 1.5$ $(1\mathrm{e}{-}6)$ & $\mathbf{4.1 \pm 0.6}$ $\mathbf{(1\mathrm{e}{-}6)}$ & $6.2 \pm 1.5$ $(1\mathrm{e}{-}6)$ \\ 
    \end{tabular}
    \caption{MSE of LegT and FouT on White Signal and Filtered Noise with $N=33$ and $N=65$.}
    \label{tab:wn_fn_table}
\end{table}
 \subsubsection{Approximating Differential Equations from Physics}
In this section, we provide additional results on the performance of our SSM construction to predict the next value of a dynamical system governed by an ODE. For this, we again consider the modified Van der Pol oscillator as described in the main text and provide additional results for the Bernoulli equation. The Bernoulli equation takes the following form:
\begin{equation}
   \dot{u}(t) + P(t)u(t)= Q(t) u(t)^n  
\end{equation}
We use $P(t) = \cos(5t)$, $Q(t)=\sin(t)$ and $n=\frac{1}{2}$. We plot the true solution and the prediction of LegT and FouT in Figure \ref{fig:physics_ode} (where again we shift the true function by 0.1 to avoid overlapping) and report the performance of the two methods in Table \ref{tab:physic_ode}. Again, we compare performance for state dimension $N=33$ and $N=65$. We notice that LegT significantly outperforms FouT by at least one order of magnitude for both $N=33$ and $N=65$ .\par
We now recall the equation for the Van der Pol's oscillator that we used in the main text:
\begin{equation}
     \dot{u}(t) = \mu (1-u(t)^2) \sin(t)
\end{equation}
where $\mu$ is a hyperparameter. In our experiments, we use $\mu=7$ and $N=17$. We plot the solution and our predictions in Figure \ref{fig:physics_ode} and Table \ref{tab:physic_ode}. We notice that for $N=33$, Legt and FouT perform comparably whereas for $N=65$ LegT outperforms FouT. We remark that by looking at Figure \ref{fig:physics_ode}, we can see that the solution for the Van der Pol's Oscillator is very similar to a square wave. Note that for this function it is very hard to predict the next value due to the very steep sudden increase/decrease in its value following a flat region. Hence, it does not come as a surprise to see that the performance for both LegT and FouT is much higher (by at least one order of magnitude across all the different values of $N$ that we tested) on the Bernoulli's equation, which is smoother. 
\begin{table}[t]
    \centering
    \small
    \begin{tabular}{c|c|c}
 & Bernoulli & Van der Pol\\
 \hline
 LegT ($N=33$) & $1.8$ $(1\mathrm{e}{-}8)$  &  $6.4 $ $(1\mathrm{e}{-}6)$\\
 FouT ($N=33$)& $3.0$ $(1\mathrm{e}{-}7)$ & $6.6$ $(1\mathrm{e}{-}6)$\\
 LegT ($N=65$) & $\mathbf{1.7}$ $(1\mathrm{e}{-}10)$  &  $\mathbf{4.4}$ $(1\mathrm{e}{-}8)$\\
 FouT ($N=65$)& $3.0$ $(1\mathrm{e}{-}7)$ & $6.6$ $(1\mathrm{e}{-}6)$\\
\end{tabular}
    \caption{MSE of LegT and FouT on Bernoulli's Differential Equation and Van der Pol's Oscillator with $N=33$ and $N=65$}
    \label{tab:physic_ode}
\end{table}
\begin{figure}[t]
\includegraphics[width=0.47\linewidth]{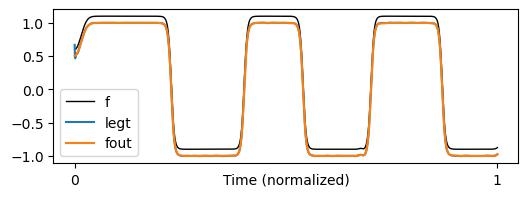}
  \hfill
  \includegraphics[width=0.47\linewidth]{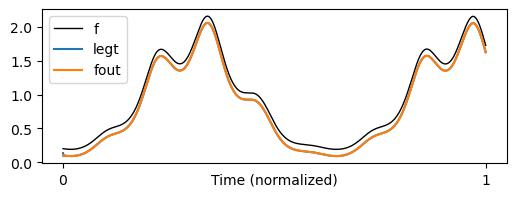}
     \caption{(Left) Solution and Prediction for Van der Pol's Oscillator with $N=65$.\\ (Right) Solution and Prediction for Bernoulli's Differential Equation with $N=65$}
     \label{fig:physics_ode}
\end{figure}
\subsection{Ablations on behavior of predictions for increasing state dimension}
In this section, we aim to validate the intuition that if we increase the state dimension $N$, the performance of our parametrization also increases as depicted in \ref{subfig:time_dependence}. This is because, as we have shown theoretically, we can express $\dot{u}$ as an infinite series of terms which we truncate after $N$ terms. As $N$ increases, one hence expects the approximation to get better. However, this only holds in the continuous setting. In the discrete setting instead, which we consider in our experiments, we are limited by the resolution with which we sample our input signal $u(t)$. 
\par
For this experiment, we take White Signal with $\gamma=1$, $2$ and $5$  and Filtered Noise with $\alpha=0.05$, $0.1$ and $0.3$ and compare the performance as $N$ increases for both LegT and FouT. We let $N$ vary from 1 to 96 with a stepsize of 5. In both White Signal and Filtered Noise, the performance of LegT rapidly decreases and then seems to stabilize. For FouT instead, the performance gets worse first and then decreases again with $N$. As we briefly mentioned in the main text, we hypothesize this is due to the fact that for low $N$ the models performs copying, i.e. it predicts the current timestep as the next one. If the signal is not too discountinuous, copying is a strategy that results in a low error since the current value of the signal and the next are fairly close. However, as $N$ increases, the model starts performing better than copying and it effectively manages to predict the next value of the input signal.

\subsection{Comparing different Initalizations}
\label{subsec:experiment3}
In this section, we provide some further information on the results obtained in \ref{subfig:construction_vs_training}. In the experiment, we use a Mixed Dataset to train the model consisting of: Sums of randomly drawn sines of frequencies between 0 and 50, Randomly Drawn White Signal's using a frequency uniformly drawn between $[0.3,1.5]$ and random Legendre polynomials up to degree 15. We empirically found this data mixture to be beneficial for the model to not overfit too much to a specific function class.
The model is then evaluated on Linear functions with slopes ranging from $[-10,10]$, the Van der Pol Function, the aforementioned Mixed Dataset and Filtered Noise functions.
Specifically, we consider three learning settings:
\begin{enumerate}
  \item Initialize the model paramaters $\overline{\A},\overline{\B}$ as proposed by HiPPO-LegT and randomly sample $\overline{\C}$ and $\overline{\D}$ from $\mathcal{N}(0,I)$. Training $\overline{\C}$ \& $\overline{\D}$ on next value predictions.
  \item Initialize the model paramaters $\overline{\A},\overline{\B}$ as proposed by HiPPO-LegT and initialize $\overline{\C}$ and $\overline{\D}$ to be our proposed $\overline{\C},\overline{\D}$. Training $\overline{\C}$ \& $\overline{\D}$ on next value predictions.
  \item Initialize the model paramaters $\overline{\A},\B$ as proposed by HiPPO-LegT and initialize $\overline{\C}$ and $\overline{\D}$ to be our proposed $\overline{\C},\overline{\D}$. Training all of $\overline{\A}$,$\overline{\B}$,$\overline{\C}$ \& $\overline{\D}$ on next value predictions.
\end{enumerate}
For all model we use $N=32$ as this showed a good trade-off between performance and efficiency. The models are trained on a batch-size of 128 performing 1000 epochs (8000 gradient steps). Our findings are the following:
The best generalizing overall model is our explicit weight construction along with initializing to our constructions of $\overline{\C},\overline{\D}$ and training on all parameters. Most importantly the model initialized with HiPPO $\overline{\A}, \overline{\B}$ and gaussian $\overline{\C},\overline{\D}$ performs much worse and seems to struggle in finding a weight construction that lets it adapt optimally to predicting the next signal value from only its previous observations. This suggests that our weight constructions could serve as an intialization scheme that could allow the model to better adapt to context.

\end{document}